
\documentclass[letterpaper, 10 pt, conference]{ieeeconf}  

\IEEEoverridecommandlockouts                              

\overrideIEEEmargins                                      


\usepackage{geometry}
\newgeometry{left=16.9mm,right=16.9mm,bottom=15.2mm,top=20.1mm}
 
\usepackage[utf8]{inputenc}  
\usepackage{leftidx}  
\usepackage{mathtools}
\mathtoolsset{showonlyrefs,showmanualtags}
\usepackage{amssymb,amsthm,amsmath}
\usepackage{dsfont}
\usepackage{booktabs} 
\usepackage{color}
\usepackage{bbold} 
\usepackage[yyyymmdd,hhmmss]{datetime}
\usepackage{bm,upgreek}
\usepackage{tablefootnote}
\usepackage{nth}
\usepackage{pgfplots}
\usepackage{tikz}
\usetikzlibrary{shapes,snakes}
\usetikzlibrary{patterns}
\usepackage{multirow}
\usepackage{footnote}
\usetikzlibrary{hobby}


\usepackage{textcomp}
\usepackage{algorithm}
\usepackage{textcomp}
\usepackage{xcolor}
\usepackage{multicol}
\usepackage{algpseudocode}
\def\BibTeX{{\rm B\kern-.05em{\sc i\kern-.025em b}\kern-.08em
   T\kern-.1667em\lower.7ex\hbox{E}\kern-.125emX}}

\newtheorem{theorem}{Theorem}
\newtheorem{definition}{Definition}
\newtheorem{remark}{Remark}
\newtheorem{lemma}{Lemma}

\newtheorem{problem}{Problem}

\usepackage{babel}

\usepackage{tikz}
\usetikzlibrary{graphs,graphs.standard}

\tikzset{%
  every neuron/.style={
    circle,
    draw,
    minimum size=.7cm
  },
  neuron missing/.style={
    draw=none, 
    scale=3,
    text height=0.333cm,
    execute at begin node=\color{black}$\vdots$
  },
}

\definecolor{red}{RGB}{187,0,0}
\definecolor{blue}{RGB}{0, 0,180}
\definecolor{pink}{RGB}{203, 76, 178}

\algdef{SE}[DOWHILE]{Do}{doWhile}{\algorithmicdo}[1]{\algorithmicwhile\ #1}%
\algnewcommand\algorithmicinput{\textbf{Input:}}
\algnewcommand\algorithmicoutput{\textbf{Output:}}
\algnewcommand\Input{\item[\algorithmicinput]}
\algnewcommand\Output{\item[\algorithmicoutput]}

\usepackage{cite}
\title{\LARGE \bf
On Forward Kinematics of a 3SPR Parallel Manipulator
}

\author{Masoud Roudneshin, Kamran Ghaffari and Amir G. Aghdam
\thanks{Masoud Roudneshin and Amir G. Aghdam are with the department of Electrical and Computer Engineering, Concordia Univerity, Montreal, QC, Canada. Email: {\tt\small m\_roundne@encs.concordia.ca, amir.aghdam@concordia.ca}}
\thanks{Kamran Ghaffari is with Touché Technologies, Montreal, QC, Canada.
        Email: {\tt\small kamran.ghaffari@touche-technologies.com }}%
}

\begin{document}
\maketitle
\thispagestyle{empty}
\pagestyle{empty}

\begin{abstract}

In this paper, a new numerical method to solve the forward kinematics (FK) of a parallel manipulator with three-limb spherical-prismatic-revolute (3SPR) structure is presented. Unlike the existing numerical approaches that rely on computation of the manipulator's Jacobian matrix and its inverse at each iteration, the proposed algorithm requires much less computations to estimate the FK parameters. A cost function is introduced that measures the difference of the estimates from the actual FK values. At each iteration, the problem is decomposed into two steps. First, the estimates of the platform orientation from the heave estimates are obtained. Then, heave estimates are updated by moving in the gradient direction of the proposed cost function. To validate the performance of the proposed algorithm, it is compared against a Jacobian-based (JB) approach for a 3SPR parallel manipulator.   

\end{abstract}

\section{INTRODUCTION}
The forward kinematics (FK) problem has attracted researchers in various engineering fields and has fundamental applications in robotics. The problem is to find the manipulator’s workspace configuration with a given set of joint lengths or angles. The requirement to solve, in real-time, a set of nonlinear equations that contain products of trigonometric functions makes the FK problem a challenging one for parallel manipulators \cite{Goss,Opt,pol1,gough old,NR1,NR2}.

There is a vast body of recent literature on methods to solve FK for parallel manipulators \cite{Goss, Opt, pol1, pol2, Pol3, gough old, NR1, NR2, NR3, NR4,NR5, NR6, quat, tarokh, multi opt, parallel haptic, ACC iran, fuzzy NR, JP merlet}. These approaches can be categorized as analytical and numerical methods. In analytical methods, the problem is reduced to solving a polynomial functions involving multiple sine and cosine products and seeking closed-form solutions \cite{Goss,pol1,pol2,Pol3, gough old}. In most cases, a post-processing step is required to identify the feasible answer among various solutions of the polynomial. However, solving a high-order polynomial may be impractical and inefficient for real-time applications.

On the other hand, the Newton-Raphson method is often employed in numerical approaches to solve the problem \cite{NR1, NR2, NR3, NR4,NR5, NR6, quat, tarokh, multi opt, parallel haptic, ACC iran, fuzzy NR, JP merlet}. This class of algorithms involves computing the Jacobian matrix of the manipulator and its inverse at each iteration. However, such methods require a significant computational resource which may be infeasible for real-time applications. Furthermore, the algorithm is sensitive to the initial value that may harm the convergence to the actual solution.

For a 3SPR architecture, passive degree-of-freedom (DoF) are a function of active DoFs and possess very small amplitudes in comparison to the dimensions of the manipulator. The corresponding equations include complex algebraic relations that need to be solved, adding to the problem's computational complexity. The present paper focuses on a 3SPR parallel manipulator with a three DoF architecture to develop a new numerical approach for solving the FK problem. The algorithm is computationally efficient as it neglects the passive DoFs at the cost of some estimation error. In the proposed method, instead of formulating the problem as a non-convex problem, FK is solved in three steps. First, it is shown that the additional error introduced by neglecting the passive DoFs is upper bounded by the maximum amplitude of the passive DoFs. Then, it is demonstrated that the orientation of the manipulator can be estimated using the translational movements of the manipulator along the $z$-axis, called heave. It is also shown that estimation errors in the orientation are functions of the heave estimation error and the maximum amplitude of the passive DoFs. Next, using some results from the inverse kinematics (IK) problem, which is much easier to solve, a cost function is proposed that indirectly measures the distance between the actual and estimated FK values. Finally, an algorithm is developed to solve for FK, and its convergence to the actual value of heave is analytically investigated.

The rest of the paper is organized as follows. In Section II, the kinematics of a 3SPR parallel manipulator is reviewed. The proposed methodology and main results are presented in Section III. In Section IV, the theoretical results are validated by simulation. The paper is concluded in Section V. 
\section{Kinematics Review of 3SPR Manipulator}
In this section, the architecture of the parallel manipulator is elaborated. Then, inverse and forward kinematics problems are discussed.
\subsection{Manipulator Architecture}
Throughout the paper, $\mathbb{R}$, $\mathbb{R}^+$  and $\mathbb{N}$ refer to the sets of real numbers, positive real numbers and natural numbers, respectively.  Given any $ n \in \mathbb{N}$, $\mathbb{N}_n$ denotes the finite set $\{1,\ldots,n\}$. For any vector $v\in \mathbb{R}^{3\times 1}$, the expression of the vector in frame $\{F\}$ is denoted by $\leftidx{^F}{v}$. Let also $\leftidx{^F}{v}^x$, $\leftidx{^F}{v}^y$, and $\leftidx{^F}{v}^z$ denote the elements of $v$ in $x$, $y$ and $z$ directions of frame $\{F\}$. Fig.~\ref{3SPR} depicts the generic architecture of a 3SPR parallel manipulator. As seen in the figure, the base of the manipulator is a triangle with vertices $A_1, A_2$ and $A_3$. An inertial frame $\{I\}$ is attached to the base of the manipulator. The moving platform is defined by a triangle $B_1B_2B_3$, which is the same size as triangle $A_1A_2A_3$. A moving coordinate frame $\{M\}$ is also attached to the moving platform. 

The three vertices $A_1, A_2$ and $A_3$ are fixed at the point of attachment of spherical joints to the ground. Similarly, vertices $B_1, B_2$ and $B_3$ are the revolute joints attached to the moving platform. The origin $O'$ of the moving platform is obtained by finding the intersection of the orthogonals to the axes of the revolute joints. The inertial frame origin $O$ is obtained analogously with respect to the triangle $A_1A_2A_3$. Each spherical joint on the inertial frame is attached to the revolute joints on the moving frame by a prismatic joint. Let $\leftidx{^I}{a_i}\in \mathbb{R}^{3\times 1}$ and $\leftidx{^I}{b_i}\in \mathbb{R}^{3\times 1}$ denote the vector of position of $A_i$  and $B_i$, $i\in \mathbb{N}_3$, with respect to the inertial frame, respectively. Also, let $a_1 = [d_1,0,0]^\intercal$, $a_2 = [-d_2,d_3,0]^\intercal$ and $a_3 = [-d_2,-d_3,0]^\intercal$, where $d_1, d_2, d_3 \in \mathbb{R}^+$. Denote $l_1$, $l_2$ and $l_3$ as the length of each prismatic actuator and $l = [l_1,l_2,l_3]^\intercal\in \mathbb{R}^{3\times 1}$ as the vector of prismatic actuators' lengths.   

The parallel 3SPR architecture provides three degrees of freedom (DoF) for the manipulator. Denote $Z$, $\alpha$ and $\beta$ as the translational motion along the $z$ axis (heave), rotational motion around the $x$ axis (roll), and the rotational motion around the $y$ axis (pitch), respectively. Also, denote $X$, $Y$ and $\gamma$ as the small translation of the moving platform along the $x$ and $y$ axes and the small rotation around the $z$ axis (yaw), respectively. The set of small motions $(X,Y,\gamma)$ are generally known as parasitic motions \cite{NR1}.

\subsection{Inverse Kinematics}
Denote $R = R_z(\gamma) R_y(\beta) R_x(\alpha)$ as the X-Y-Z rotation matrix of the moving platform with respect to the fixed frame, and $P = [X,Y,Z]^{\intercal}$ as the coordinate of the moving platfrom with respect to the fixed frame. The position of each revolute joint in $\{I\}$ can be expressed as
\begin{equation}\label{first_loop}
   \leftidx{^I}{b_i} = P + R \times\leftidx{^M}{b_i}\hspace{0.5cm} i\in \mathbb{N}_3 
\end{equation}
or equivalently
\begin{equation}\label{second_loop}
  \leftidx{^I}{b_i} = \leftidx{^I}{a_i} + l_i\hat{s}_i  \hspace{1cm} i\in \mathbb{N}_3
\end{equation}
where $\hat{s}_i$ denotes the unit vector of the $i$th prismatic joint expressed in $\{I\}$. It results from \eqref{first_loop} and \eqref{second_loop} that
\begin{equation}\label{IK}
    l_i =  |P + R \times\leftidx{^M}{b_i} - \leftidx{^I}{a_i}| \hspace{0.5cm} i\in \mathbb{N}_3
\end{equation}
In other words, if the space of prismatic values is denoted by $\theta = (l_1,l_2,l_3)$ and the workspace feasible values by $\Theta = (Z,\alpha,\beta)$, the inverse kinematics equation \eqref{IK} defines a mapping $\Phi$ from the workspace to joint space represented by
\begin{equation}\label{IK_exact}
   \Phi: \mathbb{R}^3 \rightarrow \mathbb{R}^3, \hspace{0.5cm}\theta = \Phi(\Theta).
\end{equation}
\begin{remark}
In a 3SPR manipulator, to solve the IK problem for exact joint lengths, the parasitic motions must be computed as a function of the configuration  $\Theta$.  
\end{remark}
\subsection{General Form of Parasitic Motions}\label{parasitic section}
Let $T$ denote the transformation matrix from the inertial frame to the moving frame such that
\begin{equation*}
    T = \Bigg[\begin{array}{c|c}
R & P \\
\hline
[0,0,0] & 1
\end{array}\Bigg]
\end{equation*}
Recall that the coordinates of the spherical joints with respect to $\{M\}$ can  be expressed by using $T^{-1}$ that leads to
\begin{equation}\label{transformation_eq}
\leftidx{^M}{a_i} = R^{\intercal} \times(-P + \leftidx{^I}{a_i})  
\end{equation}
Since the revolute joints constrain the relative motion of spherical joints with respect to $\{M\}$,  it always holds that
\begin{equation}\label{mech_constraint}
  \leftidx{^M}{a_1}^y = 0,\hspace{0.5cm}\leftidx{^M}{a_2}^y = m (\leftidx{^M}{a_2}^x) ,\hspace{0.5cm}\leftidx{^M}{a_3}^y = -m (\leftidx{^M}{a_3}^x)  
\end{equation}
where $m = \frac{\leftidx{^M}{b_2}^y}{\leftidx{^M}{b_2}^x} = - \frac{\leftidx{^M}{b_3}^y}{\leftidx{^M}{b_3}^x}$. From \eqref{transformation_eq} and \eqref{mech_constraint}, the closed-form solutions for these motion can be obtained. For the yaw angle $\gamma= \frac{(ab+b^2)\sin{\alpha}\sin{\beta}}{(ab+b^2)\cos{\alpha} + c^2\cos{\beta}}$. The expression for the $X$ and $Y$ are page-long algebraic equations which are provided in Appendix~I.
\begin{remark}
To obtain exact solutions for both the inverse and forward kinematics, one must account for the required extra computations imposed by considering parasitic motions. 
\end{remark}

Consider $d_1$ = 1150 mm, $d_2$ = 500 mm and $d_3$ = 390 mm that exemplify typical dimensions of a 3SPR parallel manipulator as a motion platform. Fig.~\ref{distXY} illustrates the distribution of the ratios of $\frac{X}{Z}$ and $\frac{Y}{Z}$ for this manipulator. It is observed that these amplitude ratios are relatively small in comparison to the heave for different configurations. Hence, one way to solve the FK may be to consider a simplified form of the kinematic equation at the cost of some error. Therefore, the following problem is proposed.
\begin{figure}
\centering
	\includegraphics[scale=0.25]{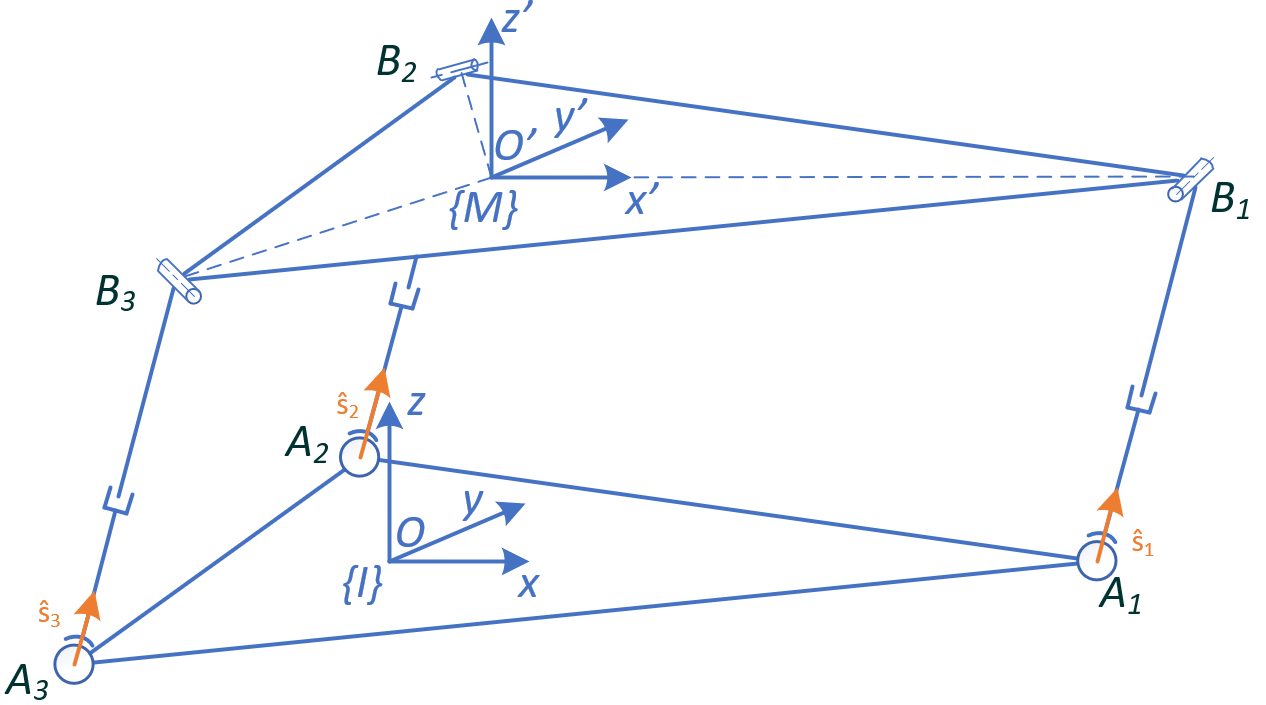}
	\caption{Architecture of a 3SPR manipulator}
	\label{3SPR}
\end{figure} 

\begin{problem}[Forward kinematics of a 3SPR manipulator]
For a given length of prismatic actuators $l_1, l_2$ and $l_3$, develop an algorithm to estimate the manipulator workspace configuration $(Z,\alpha,\beta)$. 
\end{problem}

\begin{figure}
\centering
	\includegraphics[scale=0.50]{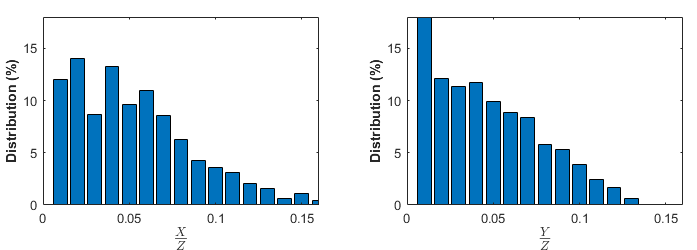}%
	\caption{Distribution of parasitic motion to heave ratios for a typical motion platform}
	\label{distXY}
\end{figure} 

\section{Main Results}
In this section, a method is proposed to solve Problem~1. 
\subsection{A Simplified Formula for IK}

Consider again the general form of the IK equation in \eqref{IK}. By neglecting the $X$ and $Y$ components of the motion, a simplified form of the IK may be introduced as 
\begin{equation}\label{IK_mapping_inexact}
   \tilde{ \theta} = \widetilde{\Phi}(\Theta)
\end{equation}
which has the following extended form
\begin{equation}\label{IK_New}
    \tilde{l_i} =  |\tilde{P} + R \times\leftidx{^M}{b_i} - \leftidx{^I}{a_i}| \hspace{0.5cm} i\in \mathbb{N}_3
\end{equation}
where $\tilde{l_i}$ denotes an inexact value of prismatic joint length for the $i$th limb, and $\tilde{P} = [0,0,Z]^{\intercal}$ is the simplified position of the manipulator. The following lemma establishes an upper bound on the error between the actual value $l_i$ and the inexact values of the joint lengths $\tilde{l_i}$. 
\begin{lemma}\label{lemma mu}
Let $\mu = \max\{|X|,|Y|\}$, then
\begin{equation*}
  {|l_i-\tilde{l_i}|}\leq\sqrt{2}\mu.  
\end{equation*}
\end{lemma}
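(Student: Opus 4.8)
The plan is to exploit the fact that $l_i$ and $\tilde{l_i}$ are Euclidean norms of two vectors that differ only in their first two components, and then bound the norm of that difference. Specifically, from \eqref{IK} and \eqref{IK_New} I would set $u_i = P + R \times \leftidx{^M}{b_i} - \leftidx{^I}{a_i}$ and $\tilde{u}_i = \tilde{P} + R \times \leftidx{^M}{b_i} - \leftidx{^I}{a_i}$, so that $l_i = |u_i|$ and $\tilde{l}_i = |\tilde{u}_i|$. Since the rotational term $R \times \leftidx{^M}{b_i}$ and the base vector $\leftidx{^I}{a_i}$ are identical in both expressions, they cancel in the difference, leaving
\begin{equation*}
  u_i - \tilde{u}_i = P - \tilde{P} = [X,Y,Z]^{\intercal} - [0,0,Z]^{\intercal} = [X,Y,0]^{\intercal}.
\end{equation*}

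Next I would invoke the reverse triangle inequality for the Euclidean norm, namely $\big||u_i| - |\tilde{u}_i|\big| \leq |u_i - \tilde{u}_i|$, to obtain
\begin{equation*}
  |l_i - \tilde{l}_i| \leq |u_i - \tilde{u}_i| = \sqrt{X^2 + Y^2}.
\end{equation*}
Finally, using the definition $\mu = \max\{|X|,|Y|\}$, I would bound each coordinate by $\mu$ to get $\sqrt{X^2 + Y^2} \leq \sqrt{\mu^2 + \mu^2} = \sqrt{2}\,\mu$, which completes the argument.

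I do not anticipate a genuine obstacle here, as the result follows directly once the cancellation is recognized. The only points requiring a modicum of care are stating the reverse triangle inequality in the correct direction (it bounds the difference of norms by the norm of the difference, not the converse) and confirming that the $Z$-component of $P - \tilde{P}$ is exactly zero so that only the $X$ and $Y$ contributions survive. Since the bound holds uniformly in $i \in \mathbb{N}_3$ — the limb index enters only through the terms that cancel — the same estimate applies to all three prismatic lengths without modification.
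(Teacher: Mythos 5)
Your proof is correct and matches the paper's own argument: the paper likewise bounds $|l_i-\tilde{l_i}|\leq |P-\tilde{P}|=\sqrt{X^2+Y^2}\leq\sqrt{2}\mu$ via the (reverse) triangle inequality, and your write-up simply makes the cancellation of the common terms and the direction of the inequality explicit. No gaps; nothing further is needed.
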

\begin{proof}
The proof is provided in Appendix~II.A.
\end{proof}

\subsection{Roll \& Pitch Estimation from the Heave Estimates}
In this step, the objective is to estimate the roll and the pitch angle from the geometry of the manipulator, based on a given heave value. From \eqref{IK}, for $i=1$, it holds that
\begin{equation*}
    l_1^2 = (X - a + a\cos(\beta))^2 + (Z - a\sin(\beta))^2 + Y^2,
\end{equation*}
and with some simplifications, one has
\begin{equation*}
\begin{split}
    (2aX + 2a^2)\cos(\beta) + &2aZ\sin(\beta) = \\
    &2a^2 + (X^2+Y^2+Z^2) - l_1^2.
\end{split}
\end{equation*}
Solving for $\beta$ yields
\begin{equation*}
    {\beta} = \lambda \pm \omega,
\end{equation*}
where
\begin{equation}\label{pitch2}
\begin{split}
    \lambda &= \arcsin\Big(\frac{2a{Z}}{\sqrt{(2aX + 2a^2)^2 + (2a{Z})^2}}  \Big),\\
    \omega &= \arccos\Big(\frac{2a^2 + (X^2+Y^2+{Z}^2) - l_1^2}{\sqrt{(2aX + 2a^2)^2 + (2a{Z})^2}} \Big).
\end{split}
\end{equation}
By considering the zero pitch condition ($\beta=0$) for a zero heave ($Z =0$), the solution with the plus sign is ruled out and therefore
\begin{equation}\label{pitch_exact}
    {\beta} = \lambda - \omega,
\end{equation}
By neglecting the effects of movement in the $X$ and $Y$ directions, the pitch angle can be estimated as
\begin{equation}\label{pitch1}
    \hat{\beta} = \hat{\lambda} - \hat{\omega}
\end{equation}
where
\begin{equation}\label{pitch2}
\begin{split}
    \hat{\lambda} &= \arcsin\Big(\frac{2a{\hat Z}}{\sqrt{(2a^2)^2 + (2a{\hat Z})^2}}  \Big),\\
    \hat{\omega} &= \arccos\Big(\frac{2a^2 + (\hat{Z}^2) - l_1^2}{\sqrt{(2a^2)^2 + (2a{\hat Z})^2}} \Big),
\end{split}
\end{equation}
where $\hat{Z}$ denotes the estimated heave. 

To solve for the roll angle, consider the IK equation for the two rear limbs, i.e. \eqref{IK} for $i=2,3$, and subtract them to obtain
\begin{equation*}
\begin{split}
     l_2^2-l_3^2 &= -4c \Big(Y\cos(\alpha) - Y + Z\cos(\beta)\sin(\alpha)\\
     &+ X\sin(\alpha)\sin(\beta) + b\sin(\alpha)\sin(\beta)\Big),
\end{split}
\end{equation*}
which can be rewritten as
\begin{equation*}
\begin{split}
    Y\cos(\alpha) + \Big({Z}\cos({\beta}) &+ (X+ b)\sin({\beta})\Big)\sin(\alpha) \\
    &=\frac{l_3^2-l_2^2}{4c} + Y,
\end{split}
\end{equation*}
which yields 
\begin{equation}
  {\alpha} = \gamma \pm \kappa
\end{equation}
where
\begin{equation}\label{roll2}
\begin{split}
   \gamma &= \arcsin\Bigg(\frac{\Big({Z}\cos({\beta}) + (X+ b)\sin({\beta})\Big)}{\sqrt{Y^2 + \Big({Z}\cos(\hat{\beta}) + (X+ b)\sin({\beta})\Big)^2}} \Bigg)\\
   \kappa &= \arccos\Bigg(\frac{\frac{l_3^2-l_2^2}{4c} + Y}{\sqrt{Y^2 + \Big({Z}\cos({\beta}) + (X+ b)\sin({\beta})\Big)^2}} \Bigg).
\end{split}
\end{equation}
Noting  the zero roll condition ($\alpha=0$) for a zero heave ($Z =0$), the solution with the plus sign is ruled out and 
\begin{equation}\label{roll_exact}
  {\alpha} = \gamma - \kappa
\end{equation}
Neglecting the translational displacement along the $X$ and $Y$ axes, the roll angle can also be estimated as
\begin{equation}\label{roll2}
  \hat{\alpha} = \hat\gamma - \hat\kappa = \frac{\pi}{2} - \hat\kappa
\end{equation}
where 
\begin{equation}\label{roll3}
\hat\kappa = \arccos\Bigg(\frac{\frac{l_3^2-l_2^2}{4c}}{\sqrt{ \Big({\hat Z}\cos({\hat\beta}) + (b)\sin({\hat\beta})\Big)^2}} \Bigg).
\end{equation}
\begin{definition}
Define $\hat R$ as the estimated rotation matrix from \eqref{pitch1} and \eqref{roll2}. Let $e_z = Z-\hat{Z}$ denote the heave estimation error. From \eqref{pitch_exact} and \eqref{roll_exact}, let $\Psi_1$ and $\Psi_2$ represent the pitch and roll angles as functions of the heave and translational motion along the $X$ and $Y$ axes. Also, define $\rho = \max \lVert P-\tilde{P}\rVert$ and the region $\mathcal{D}=\{P|\lVert P-\tilde{P}\rVert\leq \rho\}$.
\end{definition}
\begin{lemma}\label{roll-pitch-estimate}
 Let $P\in\mathcal{D}$, it holds that 
\begin{equation}
    \begin{split}
        |\alpha - \hat \alpha| &\leq \mathcal{L}_1(\sqrt{2}\mu+e_z), \\
        |\beta -\hat{\beta}| &\leq \mathcal{L}_2(\sqrt{2}\mu+e_z),
    \end{split}
\end{equation}
where
\begin{equation*}
    \begin{split}
        \mathcal{L}_1 &= \max |\nabla \Psi_1|,\\
        \mathcal{L}_2 &= \max |\nabla \Psi_2|.
    \end{split}
\end{equation*}
\end{lemma}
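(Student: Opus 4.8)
The plan is to read \eqref{pitch_exact} and \eqref{roll_exact} as defining two smooth scalar maps $\Psi_1$ and $\Psi_2$ (the pitch and roll of the Definition) in the three variables $(Z,X,Y)$, so that the \emph{exact} angles are $\Psi_1(Z,X,Y)$ and $\Psi_2(Z,X,Y)$ evaluated at the true heave and parasitic motions. The key observation is that the \emph{estimated} angles $\hat\beta$ and $\hat\alpha$ in \eqref{pitch1} and \eqref{roll2} are nothing but these same maps evaluated at the simplified argument $(\hat Z,0,0)$, i.e.\ with the parasitic terms $X,Y$ set to zero and the true heave replaced by its estimate $\hat Z$. Each error to be bounded is therefore the increment of a single function between two nearby points, $|\Psi_i(Z,X,Y)-\Psi_i(\hat Z,0,0)|$, and the two inequalities of the lemma are obtained by running the identical argument for $i=1$ and $i=2$.

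First I would split this increment along the two coordinate directions that actually change, using the triangle inequality
\begin{equation*}
\begin{split}
|\Psi_i(Z,X,Y)-\Psi_i(\hat Z,0,0)| \leq\ &|\Psi_i(Z,X,Y)-\Psi_i(Z,0,0)| \\
&+ |\Psi_i(Z,0,0)-\Psi_i(\hat Z,0,0)|.
\end{split}
\end{equation*}
The first term varies only in the parasitic coordinates $(X,Y)$ at fixed heave, and the second varies only in $Z$ with the parasitic motion switched off. Applying the mean value theorem on each segment and bounding the gradient by $\mathcal{L}_i=\max|\nabla\Psi_i|$ over $\mathcal{D}$, the first term is at most $\mathcal{L}_i\sqrt{X^2+Y^2}$ and the second is at most $\mathcal{L}_i\,|Z-\hat Z|=\mathcal{L}_i\,e_z$. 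Finally, since $\mu=\max\{|X|,|Y|\}$ forces $\sqrt{X^2+Y^2}\leq\sqrt{2}\,\mu$, the two contributions combine to the stated bound $\mathcal{L}_i(\sqrt{2}\,\mu+e_z)$.

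The hypothesis $P\in\mathcal{D}$ is precisely what makes this rigorous: it confines the configuration to a region on which $\mathcal{L}_i$ is finite and the segments joining $(\hat Z,0,0)$, $(Z,0,0)$ and $(Z,X,Y)$ all remain inside the domain, so the mean value theorem is legitimately applicable on each. The step I expect to be the main obstacle is verifying the required smoothness with a bounded gradient on $\mathcal{D}$. Since $\Psi_1,\Psi_2$ are built from $\arcsin$ and $\arccos$ (see \eqref{pitch2} and \eqref{roll2}), their gradients blow up whenever an argument approaches $\pm 1$, and $\Psi_2$ additionally depends on $(Z,X,Y)$ through the composed pitch $\beta=\Psi_1$, so the chain rule is needed. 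One must therefore confirm that, throughout $\mathcal{D}$, every $\arcsin/\arccos$ argument stays uniformly bounded away from the unit endpoints — a consequence of the parasitic amplitudes being small relative to the manipulator dimensions (Fig.~\ref{distXY}) — which is exactly what guarantees that $\mathcal{L}_1$ and $\mathcal{L}_2$ are finite and that the composed map $\Psi_2$ has a bounded gradient.
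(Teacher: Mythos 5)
Your proof is correct and follows essentially the same route as the paper: the paper likewise treats $\hat\alpha,\hat\beta$ as $\Psi_i$ evaluated at $(0,0,\hat Z)$, invokes the Lipschitz constant $\mathcal{L}_i=\max|\nabla\Psi_i|$ on $\mathcal{D}$, and then bounds $\sqrt{X^2+Y^2+(Z-\hat Z)^2}\leq\sqrt{2}\mu+e_z$ by the triangle inequality. The only (immaterial) difference is that you split the displacement into two legs and apply the mean value theorem to each, whereas the paper applies the Lipschitz bound once to the full displacement and then splits the norm; your closing remark about the $\arcsin/\arccos$ arguments staying away from $\pm1$ is a fair point the paper glosses over by simply asserting continuous differentiability on $\mathcal{D}$.
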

\begin{proof}
The proof is provided in Appendix~II.B.
\end{proof}
\subsection{Heave Estimation by Optimization}
Consider the IK mapping in \eqref{IK_exact}. Let $\Theta$ and $\hat{\Theta}$ denote the actual and estimated workspace configurations, respectively. It follows that if $\hat{\Theta} = \Theta$, then $\hat{\theta} = \theta$. One way to formulate this observation is the following distance function 
\begin{equation*}
   \Lambda_1(\hat{\Theta}) =  (\theta - {\Phi}(\hat{\Theta}))^{\intercal}(\theta - {\Phi}(\hat{\Theta})),  
\end{equation*}
where it holds that $\Lambda_1(\hat{\Theta}) = 0$ if and only if $\hat{\Theta} = \Theta$. Considering the computational effort of $\Phi$, if one employs $ \widetilde{\Phi}$ from \eqref{IK_mapping_inexact} instead, the following cost function is constructed
\begin{equation}\label{cost_NotExt}
   \Lambda_2(\hat{\Theta}) =  (\theta - \widetilde{\Phi}(\hat{\Theta}))^{\intercal}(\theta -\widetilde{\Phi}(\hat{\Theta})).
\end{equation}
\begin{remark}\label{GD_exact}
Assume that for a given set of joint lengths $l$, the roll and pitch angles are estimated by employing \eqref{pitch1} and \eqref{roll2}. Then, $\Lambda_2$ can be parametrized as a single variable function of heave, denoted as $\Lambda_3(Z)$. Then, by moving along the gradient of this function with respect to $Z$, a sub-optimal value of the heave can be obtained.
\end{remark}
\begin{remark}\label{GD_app}
By estimating the values of the roll and pitch angles, an approximate direction of the gradient can be used to update the heave estimates as
\begin{equation}\label{z_update}
    \hat{Z}_{k+1} = \hat{Z}_{k} -\eta \widehat{\bigg(\frac{\partial\Lambda}{\partial Z}\bigg)},
\end{equation}
where $\eta$ is a proper update step size defined later.
\end{remark}
Considering Remarks \ref{GD_exact} and \ref{GD_app}, Algorithm~1 is proposed to solve for the FK problem. 
\begin{algorithm}[htp]
\caption{Forward kinematics estimation}\label{Alg-Decap}

\begin{algorithmic}[1]
  \Input{$\theta = (l_1,l_2,l_3)$, step size $\eta$ and $N$ iterations }
  \Output{Forward kinematics estimate $\hat{\Theta} = (\hat{Z},\hat{\alpha},\hat{\beta})$}
  \State Initialize iteration counter $k\leftarrow 0$
  \State Initialize $\hat{Z}_{k} \leftarrow \hat{Z}_0$
  \State Calculate $\hat{\beta}_k$ from \eqref{pitch1}
  \State Calculate $\hat{\alpha}_k$ from \eqref{roll2}
  \State Calculate $\hat{Z}_{k+1}$ from \eqref{z_update}
  \While {$k<N$}
  \State $\hat{Z}_\text{k} \leftarrow \hat{Z}_{k+1}$
  \State Calculate $\hat{\beta}_k$ from \eqref{pitch1}
  \State Calculate $\hat{\alpha}_k$ from \eqref{roll2}
  \State Calculate $\hat{Z}_{k+1}$ from \eqref{z_update}
  \State $k \leftarrow k + 1$
\EndWhile
\end{algorithmic}

\end{algorithm}

\begin{lemma}\label{ubound_gradL}
It always holds that
\begin{equation}\label{gradZBound}
    \Big|\frac{\partial \tilde{l}_i}{\partial Z}\Big|\leq 1,\hspace{0.5cm}i\in \mathbb{N}_3.
\end{equation}
\end{lemma}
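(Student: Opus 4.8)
The plan is to write $\tilde{l}_i$ as a Euclidean norm whose only $Z$-dependent part is a single scalar component, differentiate the norm by the chain rule, and then invoke the elementary bound that one Cartesian component of a vector never exceeds its norm. Concretely, I would denote by $w_i$ the argument of the modulus in \eqref{IK_New}, so that $\tilde{l}_i = \lVert w_i\rVert = \sqrt{w_i^\intercal w_i}$. The partial derivative $\partial/\partial Z$ is read with the orientation held fixed, i.e.\ with $\alpha$, $\beta$ and hence $R$ treated as constants; under this reading the sole $Z$-dependent summand of $w_i$ is $\tilde{P} = [0,0,Z]^\intercal$, giving $\frac{\partial w_i}{\partial Z} = [0,0,1]^\intercal$. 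That $R$ is meant to be frozen is consistent with the update \eqref{z_update}, whose hatted gradient signals that the estimated angles $\hat\alpha$, $\hat\beta$ are not differentiated through $\hat{Z}$.

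Next I would differentiate the norm. Since a prismatic length is strictly positive, $\tilde{l}_i = \lVert w_i\rVert > 0$ and no division-by-zero arises, so the chain rule gives
\[
\frac{\partial \tilde{l}_i}{\partial Z} = \frac{w_i^\intercal \frac{\partial w_i}{\partial Z}}{\lVert w_i\rVert} = \frac{w_i^z}{\lVert w_i\rVert},
\]
where $w_i^z$ is the $z$-component of $w_i$; geometrically the derivative is the cosine of the angle between $w_i$ and the $z$-axis.

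The bound \eqref{gradZBound} then follows in one line from $|w_i^z| \le \sqrt{(w_i^x)^2+(w_i^y)^2+(w_i^z)^2} = \lVert w_i\rVert$, whence $\Big|\frac{\partial \tilde{l}_i}{\partial Z}\Big| = |w_i^z|/\lVert w_i\rVert \le 1$ for every $i\in\mathbb{N}_3$. I do not anticipate a genuine obstacle: no approximation and none of the specific geometry of the 3SPR manipulator is required, the estimate being a generic property of the gradient of a Euclidean distance with respect to one of its coordinate arguments. The only point deserving an explicit word is the interpretation of the partial derivative noted above, without which one might mistakenly differentiate through the $Z$-dependence of the estimated orientation.
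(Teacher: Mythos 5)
Your proof is correct and follows essentially the same route as the paper's: write $\tilde{l}_i$ as the Euclidean norm of $\tilde{P}+(R-I)a_i$, differentiate with the orientation held fixed so that $\partial \tilde{l}_i/\partial Z$ becomes the $z$-component of that vector divided by its norm, and conclude from the fact that a coordinate never exceeds the norm. Your added remarks on positivity of $\tilde{l}_i$ and on freezing $R$ make explicit what the paper leaves implicit, but they do not change the argument.
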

\begin{proof}
The proof is provided in Appendix~II.C.
\end{proof}

\begin{lemma}\label{l1 inq}
The following inequality holds
\begin{equation*}
    \Big|\frac{\partial \Lambda_3}{\partial Z}\Big| \leq \lVert\tilde{l} - l\rVert_1
\end{equation*}
\end{lemma}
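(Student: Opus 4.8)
The plan is to reduce $\Lambda_3$ to an explicit sum of squared residuals, differentiate term by term, and then control each factor separately using the triangle inequality together with Lemma~\ref{ubound_gradL}. By Remark~\ref{GD_exact}, once the roll and pitch are slaved to the heave through \eqref{pitch1} and \eqref{roll2}, the cost $\Lambda_2$ in \eqref{cost_NotExt} collapses to the single-variable function
\begin{equation*}
    \Lambda_3(Z) = \big(\theta - \widetilde{\Phi}(\hat{\Theta})\big)^{\intercal}\big(\theta - \widetilde{\Phi}(\hat{\Theta})\big) = \sum_{i=1}^{3}\big(l_i - \tilde{l}_i\big)^2,
\end{equation*}
where each $\tilde{l}_i = \tilde{l}_i(Z)$ carries the full dependence on $Z$, including the indirect dependence through $\hat{\alpha}(Z)$ and $\hat{\beta}(Z)$.

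Next I would differentiate. Since the measured lengths $l_i$ are constants, the chain rule gives
\begin{equation*}
    \frac{\partial \Lambda_3}{\partial Z} = -2\sum_{i=1}^{3}\big(l_i - \tilde{l}_i\big)\,\frac{\partial \tilde{l}_i}{\partial Z}.
\end{equation*}
Applying the triangle inequality and bounding each summand by the product of magnitudes yields
\begin{equation*}
    \Big|\frac{\partial \Lambda_3}{\partial Z}\Big| \leq 2\sum_{i=1}^{3}\big|l_i - \tilde{l}_i\big|\,\Big|\frac{\partial \tilde{l}_i}{\partial Z}\Big|.
\end{equation*}
Here I would invoke Lemma~\ref{ubound_gradL}, which already supplies $|\partial \tilde{l}_i/\partial Z| \leq 1$ for every limb $i \in \mathbb{N}_3$; substituting this leaves $\sum_{i=1}^{3}|l_i - \tilde{l}_i| = \lVert\tilde{l} - l\rVert_1$, i.e. the $\ell_1$ norm of the residual vector.

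The one point requiring care is the leading constant: the direct computation produces a factor $2$, so the stated inequality follows verbatim only if $\Lambda_3$ is normalized with a $\tfrac{1}{2}$ (the usual least-squares convention) or if that factor is absorbed into the approximate gradient $\widehat{\partial \Lambda/\partial Z}$ used in \eqref{z_update}; otherwise the bound reads $2\lVert\tilde{l}-l\rVert_1$. The genuinely nontrivial content — bounding the sensitivity $|\partial \tilde{l}_i/\partial Z|$ uniformly by $1$, which must account for the implicit variation of the estimated roll and pitch with $Z$ — is precisely what Lemma~\ref{ubound_gradL} establishes, so the main obstacle has effectively been discharged before one reaches this lemma. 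The remaining work is the routine differentiation and the triangle-inequality bookkeeping sketched above.
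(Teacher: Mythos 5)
Your proof is correct and follows essentially the same route as the paper's: write $\Lambda_3$ as a sum of squared residuals, differentiate, and combine the triangle inequality with Lemma~\ref{ubound_gradL} and the definition of the $\ell_1$ norm. Your concern about the leading constant is resolved exactly the way you guessed: the paper's own proof defines $\Lambda_3(Z) = \tfrac{1}{2}\sum_{i=1}^{3}(l_i-\tilde{l}_i)^2$, i.e., with the least-squares $\tfrac{1}{2}$ normalization (silently inserted relative to \eqref{cost_NotExt}), so the stated bound holds verbatim without the factor of $2$.
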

\begin{proof}
The proof is provided in Appendix~II.D.
\end{proof}
\begin{remark}
Recall that the discrepancy between $\tilde{l_i}$ and $l_i$, $i\in\mathbb{N}_3$, is due to the parasitic motions $X$ and $Y$.  In addition, parasitic motions affect the discrepancy between the estimated and exact rotation matrices $\hat R-R$. Let ${\epsilon_1}_i$, ${\epsilon_2}_i$, ${\epsilon_3}_i$ and ${\epsilon_4}_i$ denote the scaled compound effect of these errors, mathematically expressed by
\begin{equation*}
\begin{split}
    {\epsilon_1}_i &= \frac{[0\hspace{0.25cm}0\hspace{0.25cm}1]\Big([X\hspace{0.25cm}Y\hspace{0.25cm}0]+(\hat R-R)a_i\Big)}{\tilde{l_i}+l_i},\\ 
    {\epsilon_2}_i &= \frac{a_i^\intercal(\hat R-I)^\intercal(\hat R-I)a_i - a_i^\intercal(R-I)^\intercal(R-I)a_i}{\tilde{l_i}+l_i},\\
    {\epsilon_3}_i &= \frac{2[0\hspace{0.25cm}0\hspace{0.25cm}Z](\hat R-R)a_i}{\tilde{l_i}+l_i},\\
    {\epsilon_4}_i &= \frac{{X}^2+{Y}^2+2[X\hspace{0.25cm}Y\hspace{0.25cm}0](\hat R-R)a_i}{\tilde{l_i}+l_i}.
\end{split}
\end{equation*}
\end{remark}
The next lemma concerns a general form of the partial derivative of the cost function with respect to the optimal heave $Z$. 
\begin{lemma}\label{lem partial der}
The partial derivative of the cost function with respect to the heave can be expressed as
\begin{equation}\label{grad_exp}
   \frac{\partial \Lambda_3}{\partial Z} = \sum_{i=1}^3((1+\epsilon_{1i})(\hat Z-Z) + \epsilon_{2i}+\epsilon_{3i}+\epsilon_{4i})\frac{\partial \tilde{l}_i}{\partial Z}. 
\end{equation}
\end{lemma}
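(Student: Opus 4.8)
The plan is to reduce the identity to a per-limb scalar statement about the factor multiplying $\frac{\partial \tilde l_i}{\partial Z}$, and then to establish that statement by a difference-of-squares expansion. First I would write the cost explicitly as $\Lambda_3 = \sum_{i=1}^3 (l_i - \tilde l_i)^2$, in which the $l_i$ are the fixed measured lengths and each $\tilde l_i$ depends on $\hat Z$ only through the simplified position $\tilde P = [0,0,\hat Z]^\intercal$, the estimated rotation $\hat R$ being frozen (this is precisely the approximate gradient used in Remark~\ref{GD_app}). Differentiating with the $l_i$ held constant gives $\frac{\partial \Lambda_3}{\partial Z} = \sum_{i=1}^3 (\tilde l_i - l_i)\,\frac{\partial \tilde l_i}{\partial Z}$, where I suppress the overall factor of two (immaterial to the descent direction and likewise absent from Lemma~\ref{l1 inq}). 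Hence it suffices to prove, for each $i$, the scalar identity $\tilde l_i - l_i = (1+\epsilon_{1i})(\hat Z - Z) + \epsilon_{2i} + \epsilon_{3i} + \epsilon_{4i}$.

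To prove this I would exploit the common denominator $\tilde l_i + l_i$ shared by all four $\epsilon_{ji}$: rationalizing via $\tilde l_i - l_i = \frac{\tilde l_i^2 - l_i^2}{\tilde l_i + l_i}$ reduces the claim to a statement about the numerator $\tilde l_i^2 - l_i^2$. Invoking the congruence of the base and platform triangles, i.e. $\leftidx{^M}{b_i} = a_i$, I would write $l_i^2 = \lVert P + (R-I)a_i\rVert^2$ and $\tilde l_i^2 = \lVert \tilde P + (\hat R-I)a_i\rVert^2$ with $P = [X,Y,Z]^\intercal$, and subtract. The same computation also yields $\frac{\partial \tilde l_i}{\partial Z} = \frac{\hat Z + [0,0,1](\hat R-I)a_i}{\tilde l_i}$, the ratio of the vertical component of the estimated leg vector to its length, which simultaneously explains the bound of Lemma~\ref{ubound_gradL}.

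The expansion then partitions into groups that I expect to match the four error numerators. The purely vertical part $\hat Z^2 - Z^2$ together with the vertical cross term $2\hat Z[0,0,1](\hat R-I)a_i - 2Z[0,0,1](R-I)a_i$ produces the leading $(\hat Z - Z)$ contribution and, after splitting off $2Z[0,0,1](\hat R-R)a_i$, the numerator of $\epsilon_{3i}$; the quadratic-form difference $a_i^\intercal(\hat R-I)^\intercal(\hat R-I)a_i - a_i^\intercal(R-I)^\intercal(R-I)a_i$ is precisely the numerator of $\epsilon_{2i}$; and the parasitic terms $X^2+Y^2$ together with their coupling to $(\hat R-R)a_i$ assemble into the numerator of $\epsilon_{4i}$. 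Dividing each group by $\tilde l_i + l_i$ and summing over $i$ against $\frac{\partial \tilde l_i}{\partial Z}$ would then give the asserted expression for $\frac{\partial \Lambda_3}{\partial Z}$.

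I expect the extraction of the leading term to be the main obstacle. The raw subtraction gives $\tilde l_i^2 - l_i^2 = (\hat Z - Z)\big(\hat Z + Z + 2[0,0,1](\hat R-I)a_i\big) + \cdots$, whose coefficient of $(\hat Z - Z)$ is the vertical span of the two leg vectors rather than the length sum $\tilde l_i + l_i$ needed to produce the clean factor $1+\epsilon_{1i}$ after division. Reconciling the two — i.e. showing that this vertical span differs from $\tilde l_i + l_i$ only by the quantity collected into $\epsilon_{1i}$ — is exactly the point at which the near-vertical geometry of the limbs (the smallness of the horizontal leg components that motivates neglecting the parasitic motions in the first place) must be used. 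Once the leading coefficient is settled in this way, the matching of the remaining three groups against $\epsilon_{2i}$, $\epsilon_{3i}$, $\epsilon_{4i}$ is routine, and reassembling the sum completes the proof.
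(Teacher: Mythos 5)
Your route is methodologically identical to the paper's own proof, which consists of exactly the two moves you propose: rewriting the gradient as $\sum_{i=1}^3(\tilde l_i^2 - l_i^2)\frac{1}{l_i + \tilde l_i}\frac{\partial \tilde l_i}{\partial Z}$ (your sign $(\tilde l_i - l_i)$ is the correct one; the paper's intermediate equation \eqref{lambda_Grad} carries a sign slip that its own proof of this lemma silently corrects), and then asserting that substituting \eqref{IK} and \eqref{l_Extended} and matching terms against the $\epsilon$-definitions ``completes the proof.'' So there is no divergence of approach; the only question is whether the expansion-and-matching you outline actually closes.

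It does not, and the obstacle you flagged at the leading term is fatal rather than technical. Write $\tilde v_i = \tilde P + (\hat R - I)a_i$ and $v_i = P + (R-I)a_i$ for the estimated and actual leg vectors, so $\tilde l_i = \lVert \tilde v_i\rVert$, $l_i = \lVert v_i\rVert$, with third components $\tilde v_i^z$, $v_i^z$. The exact expansion is
\begin{align}
\tilde l_i^2 - l_i^2 ={}& (\hat Z - Z)\bigl(\hat Z + Z + 2[0\,\,0\,\,1](\hat R - I)a_i\bigr) + 2[0\,\,0\,\,Z](\hat R - R)a_i\\
&+ a_i^\intercal(\hat R - I)^\intercal(\hat R - I)a_i - a_i^\intercal(R - I)^\intercal(R - I)a_i\\
&- \bigl(X^2 + Y^2 + 2[X\,\,Y\,\,0](R - I)a_i\bigr).
\end{align}
The numerators of $\epsilon_{2i}$ and $\epsilon_{3i}$ appear verbatim, but two residuals remain. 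First, the factor multiplying $(\hat Z - Z)$ equals $\tilde v_i^z + v_i^z + [0\,\,0\,\,1](\hat R - R)a_i$, whereas the lemma requires $(1+\epsilon_{1i})(\tilde l_i + l_i) = \tilde l_i + l_i + [0\,\,0\,\,1](\hat R - R)a_i$; the discrepancy is the vertical deficit $(\tilde v_i^z - \tilde l_i) + (v_i^z - l_i)$, which vanishes only for perfectly vertical limbs. Second, the parasitic block enters with a minus sign and with $(R-I)$, not as $+\bigl(X^2+Y^2+2[X\,\,Y\,\,0](\hat R - R)a_i\bigr)$, so the matching against $\epsilon_{4i}$ that you called routine fails as well. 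Taking $X=Y=0$ with $R,\hat R\neq I$ and $\hat Z\neq Z$ kills every candidate correction term while the residual $(\hat Z - Z)\bigl[(\tilde v_i^z - \tilde l_i) + (v_i^z - l_i)\bigr]$ remains nonzero (the bracket is strictly negative whenever a limb is not vertical). Hence, with the paper's stated $\epsilon$'s, \eqref{grad_exp} is not an identity; the near-vertical geometry you invoke can only make the residuals small, never zero, so no completion of your argument (nor of the paper's two-line one) can prove the lemma as an exact statement. A rigorous proof must either absorb both residuals into redefined $\epsilon_{1i}$, $\epsilon_{4i}$ or restate \eqref{grad_exp} with an explicit remainder --- your analysis in fact uncovered a genuine defect that the paper's ``the proof follows'' conceals, but your proposal does not close it.
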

\begin{proof}
The proof is provided in Appendix~II.E.
\end{proof}
\begin{definition}
For a given step size $\eta$ define 
\begin{equation*}
   \delta = (1-\eta \sum_{i=1}^3(1+\epsilon_{1i})\frac{\partial \tilde{l}_i}{\partial Z}).
\end{equation*}
Also, define 
\begin{equation*}
    c_1 = \sum_{i=1}^3\lvert{1+\epsilon_1}_i\rvert, \hspace{1cm} c_2 = \sum_{i=1}^3\lvert{\epsilon_2}_i\rvert+\lvert{\epsilon_3}_i\rvert+\lvert{\epsilon_4}_i\rvert.
\end{equation*}
\end{definition}

\begin{theorem}\label{mainTh}
Let ${Z}$ and $\hat{Z}_0$ denote the actual heave and its initial estimate, respectively. Using Algorithm 1 with the step size $\eta <\frac{1}{c_1}$, then the heave estimate error after $N$ iterations is upper-bounded as described below
\begin{equation*}
\big|\hat{Z}_{N} - Z\big|\leq \max\big\{\big|\frac{c_2}{c_1}\big|\big\} + \delta^{N-1}\big|\hat{Z}_{0} - Z\big|.
\end{equation*}
\end{theorem}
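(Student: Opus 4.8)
The plan is to turn the heave update of Algorithm~\ref{Alg-Decap} into a scalar affine recurrence for the error $e_k := \hat{Z}_k - Z$ and then treat it as a contraction perturbed by a bounded forcing term. First I would substitute the closed form of the gradient from Lemma~\ref{lem partial der} into the update rule \eqref{z_update}. Writing $\hat{Z}_{k+1} = \hat{Z}_k - \eta\,\partial\Lambda_3/\partial Z$ and subtracting $Z$ from both sides gives
\begin{equation*}
e_{k+1} = \Big(1 - \eta\sum_{i=1}^3(1+\epsilon_{1i})\tfrac{\partial \tilde{l}_i}{\partial Z}\Big)e_k - \eta\sum_{i=1}^3(\epsilon_{2i}+\epsilon_{3i}+\epsilon_{4i})\tfrac{\partial \tilde{l}_i}{\partial Z}.
\end{equation*}
The bracketed coefficient is exactly the quantity $\delta$ introduced in the preceding definition, so the recurrence reads $e_{k+1} = \delta\, e_k + \xi_k$, where $\xi_k := -\eta\sum_{i=1}^3(\epsilon_{2i}+\epsilon_{3i}+\epsilon_{4i})\,\partial \tilde{l}_i/\partial Z$ collects the residual parasitic error that does not vanish as $\hat{Z}_k\to Z$.

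Next I would bound the two ingredients of the recurrence. For the forcing term, the triangle inequality together with Lemma~\ref{ubound_gradL} (which supplies $|\partial \tilde{l}_i/\partial Z|\le 1$) gives $|\xi_k| \le \eta\sum_{i=1}^3(|\epsilon_{2i}|+|\epsilon_{3i}|+|\epsilon_{4i}|) = \eta\,c_2$. For the contraction factor, the same bound yields $\big|\sum_i(1+\epsilon_{1i})\partial\tilde{l}_i/\partial Z\big| \le \sum_i|1+\epsilon_{1i}| = c_1$, so the hypothesis $\eta < 1/c_1$ forces $|1-\delta| < 1$; under the operating assumption that this sum is positive (the estimated limb length grows with heave, consistent with the geometry), this places $\delta\in(0,1)$ and makes the map a genuine contraction over $\mathcal{D}$.

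Finally I would unroll the recurrence. Applying the uniform per-step inequality $|e_{k+1}| \le \delta|e_k| + \eta c_2$ repeatedly and summing the resulting geometric series gives
\begin{equation*}
|e_N| \le \delta^{N-1}|e_0| + \eta c_2\sum_{j=0}^{N-2}\delta^{j} \le \delta^{N-1}|e_0| + \frac{\eta c_2}{1-\delta},
\end{equation*}
where the exponent $N-1$ reflects the single pre-loop initialization update in Algorithm~\ref{Alg-Decap}. Identifying $1-\delta = \eta\sum_i(1+\epsilon_{1i})\partial\tilde{l}_i/\partial Z$ and taking the worst case of this quantity over the admissible region collapses the persistent term to $\max\{|c_2/c_1|\}$, delivering the stated bound. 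The main obstacle is precisely this last identification: proving $\delta\in(0,1)$ requires controlling the \emph{sign} of $\sum_i(1+\epsilon_{1i})\partial\tilde{l}_i/\partial Z$, not merely its magnitude, and collapsing the steady-state term exactly to $c_2/c_1$ (rather than the sharper $c_2/\!\sum_i(1+\epsilon_{1i})\partial\tilde{l}_i/\partial Z$) is what forces the $\max$ over $\mathcal{D}$, since the $\partial\tilde{l}_i/\partial Z$ factors in numerator and denominator only cancel approximately.
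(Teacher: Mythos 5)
Your proposal matches the paper's proof essentially step for step: the same error recurrence $e_{k+1}=\delta e_k+\xi_k$ obtained by substituting the gradient expression of Lemma~\ref{lem partial der} into the update \eqref{z_update}, the same per-step bound $|e_{k+1}|\leq\delta|e_k|+\eta c_2$ via Lemma~\ref{ubound_gradL} and the triangle inequality, and the same geometric-series unrolling to $\delta^{N-1}|e_0|+\eta c_2/(1-\delta)$, identified with $|c_2/c_1|$ at the end. The two caveats you flag are real but are glossed over by the paper itself: its proof asserts $\eta<1/c_1\Rightarrow\delta<1$ without addressing the sign of $\sum_{i}(1+\epsilon_{1i})\,\partial\tilde{l}_i/\partial Z$, and it writes $\eta c_2/(1-\delta)=|c_2/c_1|$ as an equality even though $1-\delta$ equals $\eta\sum_{i}(1+\epsilon_{1i})\,\partial\tilde{l}_i/\partial Z$ rather than $\eta c_1$.
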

\begin{proof}
From \eqref{z_update} and \eqref{grad_exp}, it holds that
\begin{equation}
\begin{split}
    \hat{Z}_{N} - Z &=  (\hat{Z}_{N-1} - Z) \\
    &-\eta \sum_{i=1}^3((1+\epsilon_{1i})(\hat{Z}_{N-1}-Z) + \epsilon_{2i}+\epsilon_{3i}+\epsilon_{4i})\frac{\partial \tilde{l}_i}{\partial Z}\\
    &\leq\big|\delta(\hat{Z}_{N-1} - Z)\big| + \eta c_2,
\end{split}
\end{equation}
 where the last inequality results from \eqref{gradZBound} and the triangle inequality. If the step size satisfies inequality $\eta <\frac{1}{c_1}$, then $\delta <1$. Let the latter inequality be expanded as
\begin{equation}
  \begin{split}
    \big|\hat{Z}_{N} - Z\big|&\leq \delta\big|\hat{Z}_{N-1} - Z\big| + \big|\eta c_2\big|\\
    &\leq \delta^2\big|\hat{Z}_{N-2} - Z\big| + (1+\delta)\big|\eta c_2\big|\\
    &\hspace{0.25cm}\vdots\\
    &\leq \delta^{N-1}\big|\hat{Z}_{0} - Z\big| + \sum_{i=0}^{N-1} \delta^i\big|\eta c_2\big|\\
    &\leq \delta^{N-1}\big|\hat{Z}_{0} - Z\big| + \sum_{i=0}^{\infty} \delta^i\big|\eta c_2\big|\\
    &=\delta^{N-1}\big|\hat{Z}_{0} - Z\big| + \frac{\big|\eta c_2\big|}{1-\delta}\\
    &= \delta^{N-1}\big|\hat{Z}_{0} - Z\big| + \big|\frac{c_2}{c_1}\big|.\\
\end{split}  
\end{equation}
This completes the proof.
\end{proof}
\begin{remark}
If the manipulator's motion is sufficiently smooth such that $|{\epsilon_1}_i|\ll1$, $|{\epsilon_2}_i|\ll1$, $|{\epsilon_3}_i|\ll1$ and $|{\epsilon_4}_i|\ll1$, then Theorem~\ref{mainTh} implies that the heave can be estimated sufficiently accurately with the maximum residual error of $\max\big\{\big|\frac{c_2}{c_1}\big|\big\}$.  
\end{remark}
\begin{remark}
The proposed method may be be generalized to other parallel manipulators if two necessary conditions are met: i. The ratio of the parasitic motion to the independent DoFs be small enough to simplify the kinematic equations with negligible error, and, ii. For a given set of joint lengths (angles), all independent DoFs can be formulated as a function of only one DoF.     
\end{remark}
\subsection{Computational Complexity of the Forward Kinematics}\label{compJ}
In this part, we compare the computational complexity of the JB method with that of the proposed method. For the IK mapping in \eqref{IK_exact}, a first-order approximation at the $n$th time instant may be written as
\begin{equation*}
    \theta_n = \Phi(\Theta_n)=\Phi(\Theta_{n-1}+\Delta\Theta)\approx\Phi(\Theta_{n-1})+\frac{\partial\Phi}{\partial\Theta}\Delta\Theta,
\end{equation*}
where $J = \frac{\partial\Phi}{\partial\Theta}$ is known as the Jacobian in IK problem. Also, recall that $\Phi(\Theta_{n-1}) = \theta_{n-1}$, therefore
\begin{equation}\label{JEq}
    \theta_n - \theta_{n-1} = \Delta \theta   \approx J\Delta\Theta.
\end{equation}
Hence, to solve FK, by starting from an initial configuration, the work space variables can be approximated by first computing the elements of the Jacobian $J$ and then solving \eqref{JEq} for $\Delta\Theta$. 

To solve FK with the proposed method, at each iteration, estimates of the pitch \eqref{pitch1} and the roll angle \eqref{roll2} should be computed. Then, heave estimate must be updated using \eqref{z_update}. For the current 3SPR manipulator, Table~I reports the required number of elementary arithmetic operations for both methods (these values are obtained by using  MATLAB\textsuperscript\textregistered's built-in function named  socFunctionAnalyzer). It is observed that the required arithmetic operations for the JB method is approximately thirty times larger than that for one iteration of the proposed method. The relatively large algebraic expression of the parasitic motions explain the observed discrepancy between required computation of the JB and the proposed method. 

\begin{table}[t] 
\centering
\caption{Required number of elementary arithmetic operations for the JB and the proposed methods}
    \begin{tabular}{| c | c | c |c| }
    \hline
     & $\pm$ & $\times /$ & Total  \\ \hline
    JB method & 1953 & 11087 &13040\\ \hline
    Proposed method (One Iteration) & 205 & 199& 404  \\ 
    \hline
    \end{tabular}
    
\end{table}

\section{Simulations}
We run simulations for a 3SPR parallel manipulator with dimensions described in \ref{parasitic section}, where the allowable ranges for DoF parameters are $Z\in [0,100]$ mm, $\alpha\in[-3.5^{\circ},3.5^{\circ}]$ and $\beta\in[-1.5^{\circ},1.5^{\circ}]$. To assess the performance of the proposed algorithm under different motion conditions, we consider a combined parabolic, ramp and sinusoidal trajectory. Let $u(t-\tau)$ denote the unit step function delayed by $\tau$ and assume that the heave, pitch and roll angles are time-varying functions described by $Z= (50 + 1.6t^2)(u(t)-u(t-2.5)) + (85-10t)(u(t-2.5)-u(t-5)) + 15 \sin(\frac{\pi}{2} t -3\pi)$, $\alpha= 0.4t(u(t)-u(t-5)) + 2 \cos(0.4\pi t)u(t-5)$ and $\beta= (-0.4t+1)(1-u(t-2.5))+ \sin(2\pi f_{\text{pitch}} t)u(t-5)$ for 20 seconds. At each time instant, the estimated heave of each method from the previous instant is used as the initial value for the current instant. For the JB method, we run simulations by taking into account the full information of the parasitic motions. For the proposed method, we select a step size of $\eta = 0.08$. Figs.~\ref{heave_er} and \ref{high_er} compare the estimated DoF parameters for six iterations of the proposed method with that of the JB method using $f_{\text{pitch}}$ = 0.2~Hz and $f_{\text{pitch}}$ = 1~Hz, respectively. It is observed that for the ramp and parabolic portions of the trajectory, both methods track the generated motion with satisfactory precision. For the sinusoidal portion of the trajectory with the frequency $f_{\text{pitch}}$ = 0.2~Hz in the pitch channel, on the other hand the JB method estimates the DoF parameters with negligible error as observed in Fig.~\ref{heave_er}. However, Fig.~\ref{high_er} shows that for a frequency as high as $f_{\text{pitch}}$ = 1~Hz, the estimates obtained by the JB method exhibit considerable error while those obtained by the proposed method display much smaller errors. This larger estimation error is partly contributed by the first-order approximation of the inverse kinematics function in the JB method. Note that the first-order approximation error is larger for rapid motions. Finally, we investigate the effect of larger iterations on the estimation error of the proposed method. Fig.~\ref{error comp} illustrates the estimation error of the proposed method for the same trajectory for six and thirty iterations. It is observed that for the larger numbers of iterations the estimation error is smaller, as expected.  
\begin{figure}
\centering
	\includegraphics[scale=0.48]{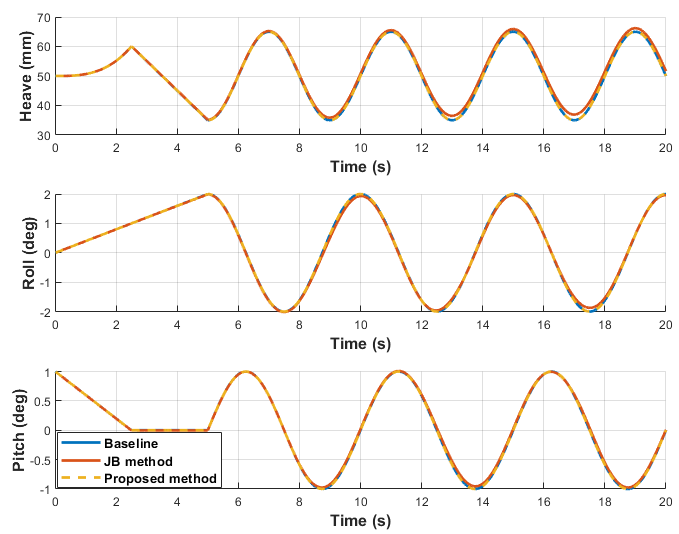}
	\caption{Estimation of the FK variables by employing the proposed algorithm with six iterations and the JB method with $f_{\text{pitch}}$ = 0.2~Hz}
	\label{heave_er}
\end{figure} 

\begin{figure}
\centering
	\includegraphics[scale=0.48]{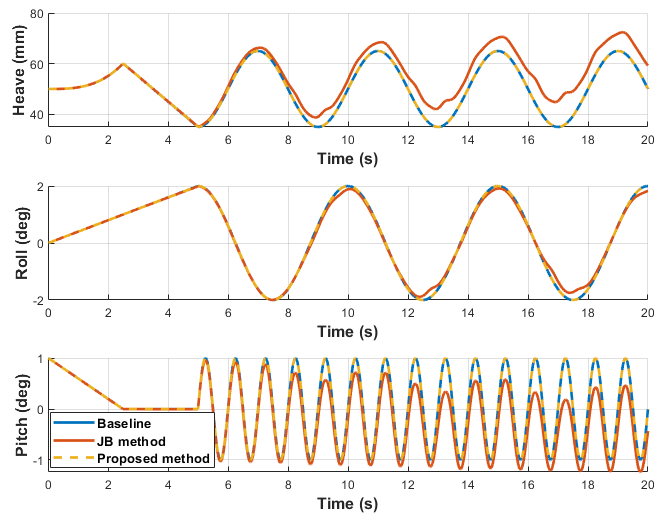}
	\caption{Estimation of the FK variables by employing the proposed algorithm with six iterations and the JB method with $f_{\text{pitch}}$ = 1~Hz}
	\label{high_er}
\end{figure} 

\begin{figure}
\centering
	\includegraphics[scale=0.43]{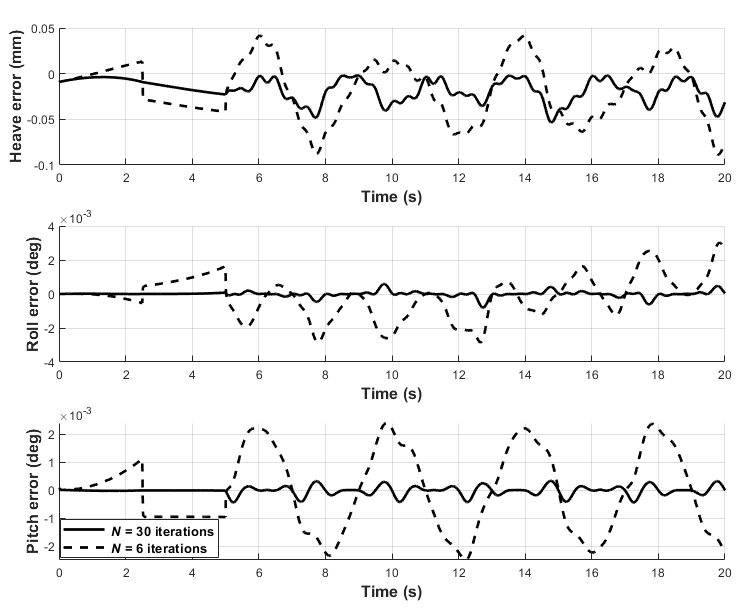}
	\caption{Comparison of the estimation error for six and thirty iterations of the proposed algorithm}
	\label{error comp}
\end{figure} 

\section{Conclusions}
In this work, the FK estimation of a 3SPR parallel manipulator was investigated and a novel numerical method was developed accordingly. The proposed algorithm is computationally more efficient compared to Jacobian-based approaches. Furthermore, analytical results were established to evaluate the performance of the proposed algorithm in terms of accuracy. Simulation results support the theoretical findings. As a future research direction, one can extend the algorithm to a more general class of parallel manipulators.  
\section*{ACKNOWLEDGMENT}

This research was supported by MITACS and Touché  Technologies Inc. under grant FR54600. We would also like to thank the reviewers for their constructive comments.


\appendices
\section{Algebraic Expression of  Parasitic Motions}
The parasitic motion along the $x$ axis is a fractional expression
$X=\frac{P_x}{Q_x}$ where
\begin{equation*}
    \begin{split}
        P_x &= d_2^5\cos^4\alpha + d_1^2d_2^3\cos^4\alpha - d_2^5\cos^3\alpha\cos\beta \\
        &+ d_2^5\sin^4\alpha\sin^4\beta + 2d_1d_2^4\cos^4\alpha\\ &- 2d_2^3d_3^2\cos^2\alpha\cos^2\beta+ d_1^2d_2^3\sin^4\alpha\sin^4\beta\\
        &-2_1d_2^4\cos^3\alpha\cos\beta - d_2d_3^4\cos\alpha\cos^3\beta\\
        &+ Z d_2^4\cos^3\alpha\sin\beta + 2d_2^5\cos^2\alpha\sin^2\alpha\sin^2\beta\\
        &- d_1^2d_2^3\cos^3\alpha\cos\beta + d_2d_3^4\cos^2\alpha\cos^2\beta\\
        &+ 2d_2^3d_3^2\cos^3\alpha\cos\beta + 2d_1d_2^4\sin^4\alpha\sin^4\beta\\
        &+ 4d_1d_2^4\cos^2\alpha\sin^2\alpha\sin^2\beta\\
        &- d_2^5\cos\alpha\cos\beta\sin^2\alpha\sin^2\beta\\
        &+ 2d_1d_2^2d_3^2\cos^3\alpha\cos\beta + Z d_1^2d_2^2\cos^3\alpha\sin\beta\\
        &+ Z d_3^4\cos\alpha\cos^2\beta\sin\beta + 2d_1^2d_2^3\cos^2\alpha\sin^2\alpha\sin^2\beta\\
        &- d_2^3d_3^2\cos^2\beta\sin^2\alpha\sin^2\beta- 2d_1d_2^2d_3^2\cos^2\alpha\cos^2\beta \\
        &+(d_2^4+d_1d_2^3) Z \cos\alpha\sin^2\alpha\sin^3\beta\\
        &+ 2Z d_1d_2^3\cos^3\alpha\sin\beta\\
        &+ Z d_1^2d_2^2\cos\alpha\sin^2\alpha\sin^3\beta + Z d_2^2d_3^2\cos\beta\sin^2\alpha\sin^3\beta\\
        &+ Z d_2^2d_3^2\cos^3\beta\sin^2\alpha\sin\beta+ Z d_2^4\cos\alpha\cos^2\beta\sin^2\alpha\sin\beta\\
        &- 2d_1d_2^4\cos\alpha\cos\beta\sin^2\alpha\sin^2\beta\\
        &- 2d_1d_2^2d_3^2\cos^2\beta\sin^2\alpha\sin^2\beta - d_1^2d_2d_3^2\cos^2\beta\sin^2\alpha\sin^2\beta\\
        &+(2d_2^3d_3^2- d_1^2d_2^3)\cos\alpha\cos\beta\sin^2\alpha\sin^2\beta\\ 
        &+ 2Z d_2^2d_3^2\cos^2\alpha\cos\beta\sin\beta\\ 
        &+ Z d_1^2d_2^2\cos\alpha\cos^2\beta\sin^2\alpha\sin\beta\\
        &+ 2d_1d_2^2d_3^2\cos\alpha\cos\beta\sin^2\alpha\sin^2\beta\\
        &+ Z d_1d_2d_3^2\cos\beta\sin^2\alpha\sin^3\beta 
        + Z d_1d_2d_3^2\cos^3\beta\sin^2\alpha\sin\beta\\
        &+ 2Z d_1d_2^3\cos\alpha\cos^2\beta\sin^2\alpha\sin\beta\\
        &+ 2Z d_1d_2d_3^2\cos^2\alpha\cos\beta\sin\beta,
    \end{split}
\end{equation*}
and
\begin{equation*}
    \begin{split}
        Q_x &= \cos\alpha\cos\beta \times\\
        \Big(&d_2^4\cos^2\alpha+ d_3^4\cos^2\beta+ d_1^2d_2^2\cos^2\alpha + d_2^4\sin^2\alpha\sin^2\beta\\
        &+ 2d_1d_2^3\cos^2\alpha + d_1^2d_2^2\sin^2\alpha\sin^2\beta+ 2d_2^2d_3^2\cos\alpha\cos\beta\\
        &+ 2d_1d_2^3\sin^2\alpha\sin^2\beta+ 2d_1d_2d_3^2\cos\alpha\cos\beta\Big).
    \end{split}
\end{equation*}
The parasitic motion along the $y$ axis is a fractional expression $Y=\frac{P_y}{Q_y}$ where
\begin{equation*}
    \begin{split}
        P_y&= d_2^3d_3^2\sin^3\alpha\sin^3\beta + Z d_3^4\cos^3\beta\sin\alpha\\ 
        &+ Z d_2^4\cos^2\alpha\cos\beta\sin\alpha  + Z d_3^4\cos\beta\sin\alpha\sin^2\beta\\
        &- d_1d_3^4\cos^2\beta\sin\alpha\sin\beta - d_2d_3^4\cos^2\beta\sin\alpha\sin\beta\\
        &+ d_1d_2^2d_3^2\sin^3\alpha\sin^3\beta + d_2^3d_3^2\cos^2\alpha\sin\alpha\sin\beta\\
        &+ Z d_2^2d_3^2\cos\alpha\sin\alpha\sin^2\beta + d_1d_2^2d_3^2\cos^2\alpha\sin\alpha\sin\beta\\
        &- d_2^3d_3^2\cos\alpha\cos\beta\sin\alpha\sin\beta + 2Z d_1d_2^3\cos^2\alpha\cos\beta\sin\alpha\\
        &+ d_2d_3^4\cos\alpha\cos\beta\sin\alpha\sin\beta + Z d_1^2d_2^2\cos^2\alpha\cos\beta\sin\alpha\\
        &+ 2Z d_2^2d_3^2\cos\alpha\cos^2\beta\sin\alpha + Z d_1d_2d_3^2\cos\alpha\sin\alpha\sin^2\beta\\
        &- 2d_1d_2^2d_3^2\cos\alpha\cos\beta\sin\alpha\sin\beta - d_1^2d_2d_3^2\cos\alpha\cos\beta\sin\alpha\sin\beta\\
        &+ 2Z d_1d_2d_3^2\cos\alpha\cos^2\beta\sin\alpha,
    \end{split}
\end{equation*}
and
\begin{equation*}
    \begin{split}
        Q_y&= \cos\alpha\times\\
        \Big(&d_2^4\cos^2\alpha+ d_3^4\cos^2\beta + d_1^2d_2^2\cos^2\alpha+ d_2^4\sin^2\alpha\sin^2\beta\\
        &+ 2d_1d_2^3\cos^2\alpha + d_1^2d_2^2\sin^2\alpha\sin^2\beta + 2d_2^2d_3^2\cos\alpha\cos\beta\\
        &+ 2d_1d_2^3\sin^2\alpha\sin^2\beta + 2d_1d_2d_3^2\cos\alpha\cos\beta\Big).
    \end{split}
\end{equation*}

\section{Proof of Lemmas}
\subsection{Proof of Lemma~\ref{lemma mu}}
From \eqref{IK} and \eqref{IK_New} and the triangle inequality one has
\begin{equation}\label{l_uB}
     |l_i-\tilde{l_i}| \leq |P - \tilde{P}|=\sqrt{X^2+Y^2}\leq\sqrt{2\mu^2}=\sqrt{2}\mu.
\end{equation}
\subsection{Proof of Lemma~\ref{roll-pitch-estimate}}
Since $\Psi_1(X,Y,Z)$ is continuously differentiable in the bounded region $\mathcal{D}$, it has a bounded Lipschitz constant within that region denoted by $\mathcal{L}_1$, which yields
\begin{equation}
\begin{split}
    |\alpha - \hat \alpha| &\leq \mathcal{L}_1|(X,Y,Z)-(0,0,\hat{Z})|  \\
    &= \mathcal{L}_1\sqrt{{X}^2+{Y}^2 + (Z-\hat{Z})^2}\leq\mathcal{L}_1(\sqrt{2}\mu+e_z)
\end{split}
\end{equation}
where the last inequality results from the triangular inequality. Using the same line of argument, an upper bound for the roll estimate error can also be established.
\subsection{Proof of Lemma~\ref{ubound_gradL}}
It follows from \eqref{IK} that
\begin{equation}\label{l_Extended}
 \tilde{l}_i = \sqrt{(\tilde{P}+(R-I)a_i)^{\intercal}(\tilde{P}+(R-I)a_i)}   
\end{equation}
and it holds that
\begin{equation*}
  \frac{\partial \tilde{l}_i}{\partial Z} = \frac{[0\hspace{0.25cm}0\hspace{0.25cm}1](\tilde{P}+(R-I)a_i)}{\sqrt{(\tilde{P}+(R-I)a_i)^{\intercal}(\tilde{P}+(R-I)a_i)} }. 
\end{equation*}
Recall that for any vector $v$ it holds that 
\begin{equation*}
  \Big|\frac{[0\hspace{0.25cm}0\hspace{0.25cm}1]v}{\sqrt{v^{\intercal}v} }\Big|\leq 1
\end{equation*}
This completes the proof.
\subsection{Proof of Lemma~\ref{l1 inq}}
From \eqref{cost_NotExt} and Remark~\ref{GD_exact} the cost function is described as
\begin{equation}
   \Lambda_3({Z}) =  \frac{1}{2}\sum_{i = 1}^3 (l_i - \tilde{l}_i)^2,
\end{equation}
Therefore
\begin{equation}\label{lambda_Grad}
   \frac{\partial \Lambda_3}{\partial Z} =  \sum_{i = 1}^3 (l_i - \tilde{l}_i) \frac{\partial \tilde{l}_i}{\partial Z}.
\end{equation}
The proof follows from Lemma~\ref{ubound_gradL} and the definition of $L_1$ norm.

\subsection{Proof of Lemma~\ref{lem partial der}}
From \eqref{lambda_Grad}, one has
\begin{equation}
   \frac{\partial \Lambda_3}{\partial Z} =  \sum_{i = 1}^3 (\tilde{l}_i^2 - {l_i}^2)\frac{1}{l_i + \tilde{l}_i} \frac{\partial \tilde{l}_i}{\partial Z}.
\end{equation}
Using \eqref{IK} and \eqref{l_Extended}, the proof follows .
\end{document}